\documentclass[runningheads]{llncs}

\usepackage{eccv}

\usepackage{eccvabbrv}

\usepackage{graphicx}
\usepackage{booktabs}

\usepackage[accsupp]{axessibility}  

\usepackage{hyperref}
\usepackage{amsmath}
\usepackage{amssymb}
\usepackage{placeins}
\usepackage{enumitem}
\usepackage{orcidlink}
\newcommand{\Stwo}{\mathbb{S}^2}
\newcommand{\SOthree}{\mathrm{SO}(3)}

\newcommand{\R}{\mathbb{R}}
\newcommand{\pp}{\,pp}
\newcommand{\CVF}{\rho}  

\begin{document}

\title{Coordinate Singularities Break Conformal Coverage for Gaze and Head Pose} \titlerunning{Coordinate Singularities Break Conformal Coverage} 

\author{ 
Mohammadreza Jamalifard\orcidlink{0009-0008-4969-1398} 
\and Yaxiong Lei\orcidlink{0000-0002-0697-7942}$^*$ \and Parastoo Azizinezhad\orcidlink{0009-0008-2537-5059} \and Javier Andreu-Perez\orcidlink{0000-0002-7421-4808}$^*{\dagger}$ } \authorrunning{M. Jamalifard et al.} \institute{Smart Health Technologies Group, Centre for Computational Intelligence, School of Computer Science and Electronic Engineering, University of Essex, Colchester, UK\\ \email{\{m.jamalifard,yaxiong.lei,p.azizinezhad,j.andreu-perez\}@essex.ac.uk}\\[2pt] $^*$Equal contribution.\quad $^\dagger$Corresponding author. 
}


\maketitle

\begin{abstract}
Conformal prediction provides distribution-free reliability guarantees for vision systems, but these guarantees depend on how prediction errors are measured in the output space.
Many vision tasks produce outputs on curved spaces (\eg gaze directions on the sphere or 3D head rotations), yet intermediate prediction heads, residuals, uncertainty estimates, or conformal scores are often defined in flat coordinate charts such as yaw--pitch or Euler angles.
We show that this scoring choice introduces systematic geometric distortion near coordinate singularities (large pitch angles on the sphere and poses approaching gimbal lock in 3D rotations). Across four datasets (ETH-XGaze, Gaze360, BIWI, AFLW2000-3D), slice-conditional coverage at a nominal 90\% target drops by 30--50 percentage points in these regions, falling to 38.9\% on ETH-XGaze and 42.0\% on Gaze360 at gaze pitch above $70^\circ$, and to 57.5\% on BIWI and 55.2\% on AFLW2000-3D at head pose pitch above $60^\circ$ near gimbal lock, despite marginal coverage remaining near 90\%.
We prove that this is structural. Scalar thresholding changes the size of chart-coordinate prediction sets but leaves their distorted axis ratios unchanged.
To diagnose this hidden failure mode, we show that a simple geometric quantity, the Riemannian volume density, strongly correlates with where coverage collapse occurs.
Finally, we show that coordinate-free geodesic scoring removes this distortion. 
It requires no retraining and adds negligible computational cost.
\keywords{Conformal prediction \and Gaze estimation \and Head pose estimation \and Riemannian geometry}
\end{abstract}

\section{Introduction}
\label{sec:intro}

Consider a driver monitoring system using conformal prediction to guarantee 90\% coverage of a driver's gaze direction. Marginal conformal evaluation appears nominal: overall coverage is 90.8\%. Yet at extreme pitch angles associated with distraction and drowsiness~\cite{nikan2022appearance}, coverage drops to roughly 39\%, so fewer than half of the prediction sets contain the true gaze direction. Such viewing angles are not rare: combined eye and cervical rotation regularly exceeds $70^\circ$~\cite{lee2019differences,feipel1999normal} during mirror checks, instrument-panel glances, and drowsiness-related head drops, consistent with natural driving variation~\cite{fridman2016owl}. The failure persists across architectures, training procedures, and calibration set sizes.
Its source is more fundamental: the coordinate system used for the conformal nonconformity score.

Gaze directions lie on the sphere $\Stwo$~\cite{zhang2020eth}, and
head poses are rotations in $\SOthree$~\cite{hartley2003multiple,ma2012invitation}.
Vision pipelines often represent intermediate predictions, labels, or downstream calibration residuals using yaw--pitch or Euler angles, even when final benchmark accuracy is reported with angular error.
Our analysis concerns the nonconformity score used inside conformal calibration. If this score is computed from chart-coordinate residuals, the resulting prediction sets inherit the chart geometry. These charts distort distances: they compress regions near coordinate singularities (the poles of $\Stwo$ or gimbal lock in $\SOthree$) and stretch others~\cite{huynh2009metrics,zhou2019continuity}.
A fixed threshold corresponds to a large region near the equator
but only a narrow sliver near the pole
(Fig.~\ref{fig:gaze_sphere}).
Because marginal coverage is enforced by construction,
coverage is redistributed across the manifold.
Well-conditioned regions overcover while singular regions
undercover~\cite{barber2021limits,vovk2005algorithmic}.
Global evaluation metrics therefore do not reveal this failure.

A natural response is to apply adaptive conformal
methods~\cite{lei2018distribution,romano2019conformalized},
which rescale thresholds based on local difficulty.
However, these methods address heteroscedastic model error,
not geometric distortion.
The mismatch is one of \emph{shape}, not \emph{scale}.
Near the pole, yaw--pitch residuals form an ellipse whose
eccentricity is fixed by the coordinate geometry.
Scalar rescaling can change the size of this ellipse but not
its shape.
On ETH-XGaze~\cite{zhang2020eth}, normalised yaw--pitch scoring
improves near-pole coverage only from 39\% to 50\%.
Switching to geodesic scoring with the same normalisation
raises coverage to 89\%. These observations suggest that conformal prediction on manifold-valued outputs requires respecting the intrinsic geometry of the output space.

\paragraph{Contributions.}
We formalise this geometric failure mode and show how to
eliminate it.

\begin{figure*}[t!]
    \centering
    \includegraphics[width=0.93\textwidth]{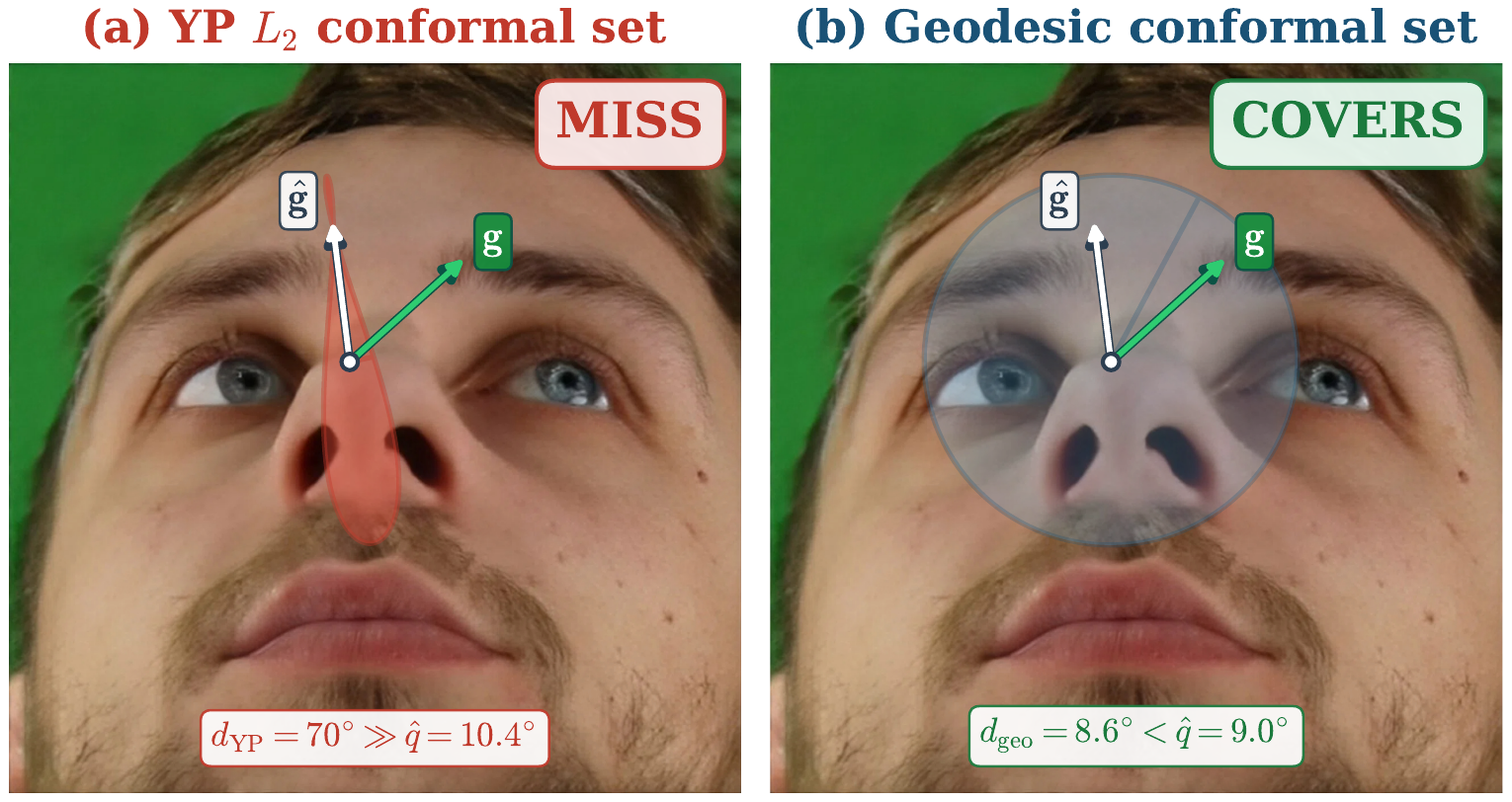}
    \caption{\textbf{Chart distortion breaks conformal coverage}
    (schematic overlay, ETH-XGaze, $|\mathrm{pitch}|=82.5^{\circ}$).
    (a)~The yaw--pitch (YP) $L_2$ conformal set ($\hat{q}=10.4^{\circ}$)
    collapses into a narrow teardrop because
    $\cos(82.5^{\circ})\!\approx\!0.13$ compresses the yaw
    dimension; the ground truth~$\mathbf{g}$ falls outside
    ($d_{\mathrm{YP}}=70^{\circ}\gg\hat{q}$).
    (b)~The geodesic conformal set ($\hat{q}=9.0^{\circ}$)
    forms a uniform circular cap on~$S^2$ and covers~$\mathbf{g}$
    ($d_{\mathrm{geo}}=8.6^{\circ}<\hat{q}$).
    The prediction and calibration data are the same; only the scoring geometry differs.
    Chart coordinates inflate the true angular gap by
    $8.2$ times.}
    \label{fig:gaze_sphere}
\end{figure*}

\begin{enumerate}[leftmargin=*,nosep]
\item[\textbf{C1.}] \textbf{Coverage audit for gaze and head pose.}
Across four datasets and two manifolds ($\Stwo$, $\SOthree$),
we observe drops of 30--50\pp{} in slice-conditional coverage
for yaw--pitch and Euler-angle conformal scores despite
correct marginal calibration.
A controlled experiment on $\SOthree$ with synthetic isotropic
noise isolates coordinate geometry as the sole cause.

\item[\textbf{C2.}] \textbf{Impossibility of scalar chart correction.}
We prove (Proposition~\ref{prop:impossibility}) that scalar
adaptive conformal methods can change the size of prediction
sets but not their local shape.
The axis ratios are determined by the metric tensor of the
chart and therefore cannot be corrected by scalar
thresholding.

\item[\textbf{C3.}] \textbf{A diagnostic and practical remedy.}
We show that the Riemannian volume density provides a simple
diagnostic for where chart-based systems are likely to
undercover.
Using coordinate-free geodesic scores eliminates the
distortion entirely, requiring no retraining and adding
negligible computation
(${\le}0.02\,\mu$s per sample).

\end{enumerate}

\section{Related Work}
\label{sec:related}

\paragraph{Conformal prediction.}
Split conformal prediction~\cite{lei2018distribution,vovk2005algorithmic}
provides distribution-free marginal coverage under exchangeability,
with extensions to adaptive
settings~\cite{papadopoulos2002inductive,romano2019conformalized},
group-conditional guarantees~\cite{vovk2005algorithmic},
and distribution shift~\cite{gibbs2021adaptive,tibshirani2019conformal};
see~\cite{stephen2021gentle,fontana2023conformal,zhou2025conformal}
for surveys.
Barber~\etal~\cite{barber2021limits} show that marginal guarantees
do not imply conditional ones without further assumptions.
Most conformal methods target Euclidean outputs; structured geometric
outputs remain underexplored.
Cholaquidis~\etal~\cite{cholaquidis2025conformal} recently prove
marginal validity for geodesic scores on Riemannian manifolds.
Our work complements theirs. We show that commonly used
chart-based scores in gaze and pose pipelines can produce
systematic slice-conditional undercoverage. We further prove
that scalar threshold adaptation cannot correct this geometric
distortion.

\paragraph{Gaze and head pose estimation.}
Appearance-based gaze
methods~\cite{zhang2020eth,kellnhofer2019gaze360,cheng2022gaze,abdelrahman2023l2cs, lei2023end}
and head pose
estimators~\cite{ruiz2018fine,yang2019fsa,hempel20226d,martyniuk2022dad}
typically report angular or per-component MAE;
uncertainty-aware
variants~\cite{zheng2023confidence,zhong2024uncertainty,cantarini2022hhp, lei2025mac, lei2025quantifying, wang2025ptgaze}
provide confidence estimates but lack distribution-free guarantees.
A persistent gap can arise between representation, training, calibration, and evaluation geometry.
Gaze networks may predict yaw--pitch coordinates before angular-error evaluation~\cite{abdelrahman2023l2cs,zhang2020learning, lei2023end}, while head-pose systems may train with continuous rotation representations but still decode, report, or calibrate in Euler coordinates~\cite{zhou2019continuity,hempel20226d,huynh2009metrics}.
Our analysis concerns the geometry of the conformal score: if the score is computed in yaw--pitch or Euler coordinates, the conformal set inherits chart distortion.
To our knowledge, no existing pipeline audits conformal reliability across pose or gaze ranges.

\paragraph{Uncertainty quantification for orientations.}
Deep ensembles~\cite{lakshminarayanan2017simple}, MC dropout~\cite{gal2016dropout}, heteroscedastic regression~\cite{kendall2017uncertainties}, and distributional approaches on $\SOthree$~\cite{mohlin2020probabilistic,prokudin2018deep} provide uncertainty estimates without finite-sample coverage guarantees.
Johnstone~\etal~\cite{johnstone2021conformal} study learned anisotropic conformal sets in Euclidean spaces; extending such methods to $\Stwo$ or $\SOthree$ would require estimating local geometry.
Directional distributions could give locally geometric sets but sacrifice distribution-free guarantees.

\paragraph{Positioning.}
Geometric distortion in Euler and yaw--pitch charts is well known~\cite{huynh2009metrics,zhou2019continuity}, but its consequences for conformal reliability have not been studied.
To our knowledge, no prior work has shown that a nominally calibrated conformal system can silently undercover by 30--50\pp{} in safety-critical output regions, 
and no prior work has proved that standard adaptive remedies
(normalised CP, CQR) are structurally unable to correct this. Our approach preserves the distribution-free guarantee of conformal prediction while eliminating chart-induced distortion via coordinate-free scoring.
\section{Background and Formal Analysis}
\label{sec:method}

\paragraph{Conformal prediction primer.} Given a predictor $\hat y(x)$ and calibration scores $s_i=s(y_i,\hat y_i)$, split conformal prediction chooses the empirical $(1-\alpha)$ quantile $\hat q_\alpha$ and returns $\mathcal C(x)=\{y:s(y,\hat y(x))\le \hat q_\alpha\}$. Under exchangeability, $\Pr\{Y\in\mathcal C(X)\}\ge 1-\alpha$, but marginal coverage need not hold on slices such as high gaze pitch or near-gimbal-lock poses. We study how the score $s$ affects such slice-conditional coverage. \paragraph{Notation.} Let $\mathcal M$ be the output manifold, $\varphi:\mathcal M\supset U\to D\subset\mathbb R^d$ a chart, $\xi=\varphi(y)$ coordinates, $D\varphi$ the chart differential, $G(\xi)$ the metric tensor, and $\rho(\xi)=\sqrt{\det G(\xi)}$ the volume density.

Figure~\ref{fig:pipeline} summarises the protocol.

\subsection{Setup}
\label{sec:setup}

We analyse the common conformal-wrapper choice in which predictions and labels are represented in a coordinate chart and the nonconformity score is the Euclidean norm of the coordinate residual.
For gaze, this chart score is
$s_{\mathrm{chart}}=\|(\Delta\psi,\Delta\theta)\|_2$ in yaw--pitch coordinates.
For head pose, the analogous score uses ZYX Euler residuals.
This is distinct from the final benchmark metric: a system may report angular error while a downstream conformal wrapper still calibrates coordinate residuals.
(Section~\ref{sec:scores} defines the full score taxonomy;
here we focus on $s_{\mathrm{chart}}$ and the geodesic score
$s_{\mathrm{geo}} = d_g(y, \hat{y})$.)

The discrepancy is captured by the \emph{Riemannian metric
tensor} $G(\xi)$, which describes how infinitesimal coordinate
displacements relate to true distances and volumes on the
manifold~\cite{docarmo1992riemannian,lee2018introduction}:
\begin{equation}
  ds^2 = d\xi^\top G(\xi)\, d\xi, \qquad
  d\mathrm{Vol} = \sqrt{\det G(\xi)}\, d\xi.
  \label{eq:metric}
\end{equation}
When $G = I_d$, coordinates faithfully represent geometry and
$s_{\mathrm{chart}}$ coincides with $s_{\mathrm{geo}}$.
When $G \ne I_d$, the chart score distorts intrinsic distances.

For yaw--pitch on $\Stwo$, the metric is
$G(\psi, \theta) = \mathrm{diag}(\cos^2\!\theta,\; 1)$.
Yaw differences are scaled by $\cos\theta$, which vanishes at
the poles.
For ZYX Euler angles on $\SOthree$ (the convention used
throughout; other sequences have analogous singularities at
different angles), the metric (Supplementary Sec.~A) is
\begin{equation}
  G(\alpha,\beta,\gamma) =
  \begin{pmatrix}
    1 & 0 & -\!\sin\beta \\
    0 & 1 & 0 \\
    -\!\sin\beta & 0 & 1
  \end{pmatrix}\!,
  \label{eq:metric_so3}
\end{equation}
yielding $\det G = \cos^2\!\beta$, which degenerates at gimbal lock
($|\beta| = 90^\circ$).
In both cases, $\lambda_{\min}(G) \to 0$ near the singularity,
so a fixed chart-score radius corresponds to a vanishing
manifold volume.


\paragraph{Consequences for conformal prediction.}
Split conformal prediction~\cite{vovk2005algorithmic} selects
a threshold $\hat{q}_\alpha$ on calibration scores to guarantee
marginal coverage $1-\alpha$, where $\alpha \in (0,1)$ is the
target miscoverage level.
Because most calibration samples lie where $G \approx I_d$, the
threshold is set for those regions.
Near singularities, the same threshold carves out a manifold
region that is too small, producing systematic undercoverage.
All experiments unwrap angles to $[-\pi, \pi]$; the effects
documented below arise from metric distortion, not wrap-around.

\begin{figure}[t]
\centering
\includegraphics[width=\linewidth]{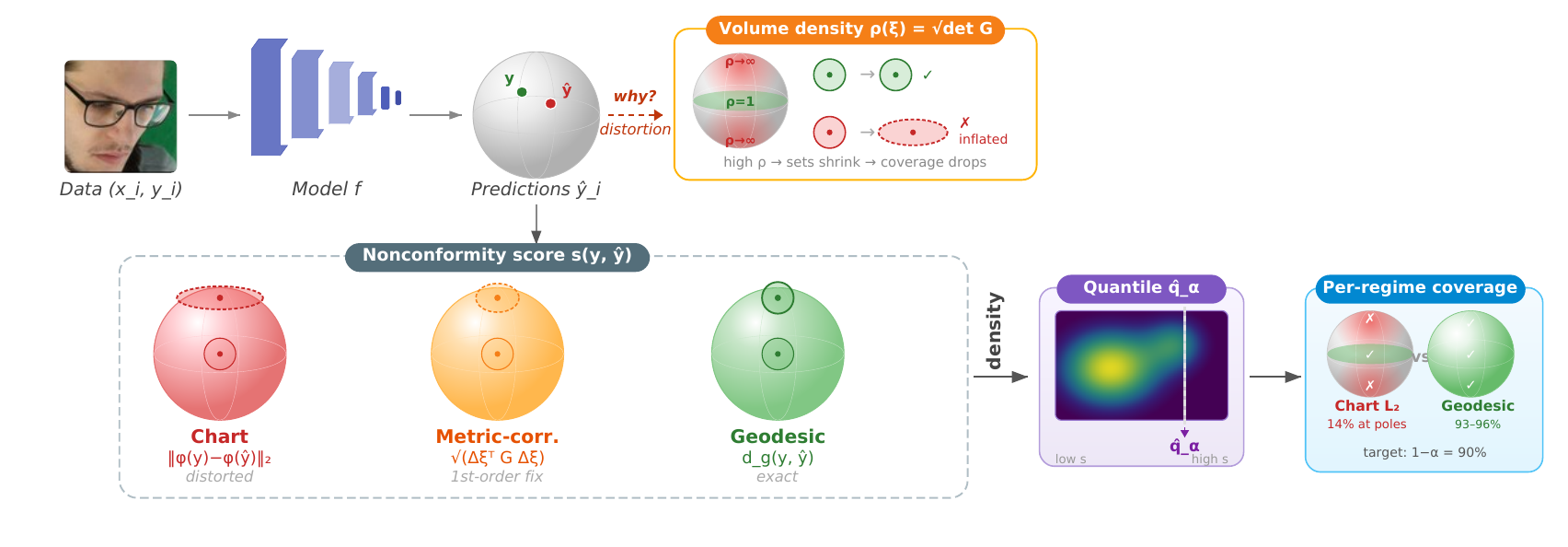}
\caption{\textbf{Protocol overview.}
The key decision point is the nonconformity score.
Chart-based scores inherit coordinate distortion; geodesic scores
are coordinate-free.
The volume density~$\CVF$ indicates where chart scores are likely to fail.}
\label{fig:pipeline}
\end{figure}
\subsection{Coverage Distortion from the Metric Tensor}
\label{sec:coverage_distortion}
We quantify how chart distortion affects the distribution of
residuals and therefore conditional coverage.
Fix a prediction $\hat{Y}(x) = p \in \mathcal{M}$ and model
local error in the tangent space:
$Y = \exp_p(\varepsilon)$,
$\varepsilon \sim \mathcal{N}(0, \Sigma_p)$,
where $\Sigma_p \succ 0$ is an arbitrary positive-definite
covariance and $\varepsilon$ is small enough that $Y$ remains
in the chart domain.
The isotropic special case $\Sigma_p = \sigma^2 I_d$ isolates
chart distortion from model anisotropy; the general case shows
how the two interact.

\begin{proposition}[Chart-induced coverage distortion]
\label{prop:coverage_distortion}
Under the local error model above, expressed in a $g$-orthonormal basis of $T_p\mathcal{M}$, the chart residual
$\Delta\xi_p = \varphi(Y) - \varphi(p)$ satisfies
$\Delta\xi_p \approx \mathcal{N}(0,\; J_p\,\Sigma_p\,J_p^\top)$
where $J_p = D(\varphi \circ \exp_p)|_0$.
Let $R_p = \|\Delta\xi_p\|_2$,
$F_p(r) = \Pr(R_p \le r)$, and
$\mu_1(p) \le \cdots \le \mu_d(p)$ be the eigenvalues
of the combined matrix $M_p = J_p\,\Sigma_p\,J_p^\top$.
Then for all $r \ge 0$,
\begin{equation}
  \Phi_d\!\left(\frac{r}{\sqrt{\mu_d(p)}}\right)
  \le F_p(r) \le
  \Phi_d\!\left(\frac{r}{\sqrt{\mu_1(p)}}\right)\!,
  \label{eq:sandwich}
\end{equation}
where $\Phi_d$ is the CDF of the $\chi_d$ distribution.
In the isotropic case ($\Sigma_p = \sigma^2 I_d$), letting $\xi_p = \varphi(p)$, the combined matrix is $M_p = \sigma^2 G(\xi_p)^{-1}$ and the set of eigenvalues reduces to $\{\sigma^2 / \lambda_i^G\}_{i=1}^d$, recovering a bound in terms of the metric-tensor eigenvalues alone.
\end{proposition}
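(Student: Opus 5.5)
Three steps: a first-order linearisation gives the Gaussian law of the chart residual; an eigen-decomposition plus a monotonicity argument gives the $\chi_d$ sandwich; and a pullback-metric identity handles the isotropic case.

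\emph{Step 1 (linearisation).} Write $\Delta\xi_p = (\varphi\circ\exp_p)(\varepsilon) - (\varphi\circ\exp_p)(0)$ and Taylor-expand at $0$. This gives $\Delta\xi_p = J_p\varepsilon + O(\|\varepsilon\|^2)$. For $\varepsilon\sim\mathcal N(0,\Sigma_p)$ the linear term is exactly $\mathcal N(0, J_p\Sigma_pJ_p^\top)$. The "$\approx$" in the statement is read as equality to first order, so from here on I treat $\Delta\xi_p \sim \mathcal N(0,M_p)$ as exact.

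\emph{Step 2 (sandwich).} Diagonalise $M_p = U\,\mathrm{diag}(\mu_1,\dots,\mu_d)\,U^\top$. Then $\Delta\xi_p \overset{d}{=} U\,\mathrm{diag}(\sqrt{\mu_i})\,Z$ with $Z\sim\mathcal N(0,I_d)$. Since $U$ is orthogonal,
\begin{equation*}
R_p^2=\sum_{i=1}^d \mu_i Z_i^2 .
\end{equation*}
Pointwise in $Z$ we have $\mu_1\|Z\|^2\le R_p^2\le \mu_d\|Z\|^2$. Hence
\begin{equation*}
\{\sqrt{\mu_d}\,\|Z\|\le r\}\subseteq\{R_p\le r\}\subseteq\{\sqrt{\mu_1}\,\|Z\|\le r\}.
\end{equation*}
Because $\|Z\|\sim\chi_d$, taking probabilities gives exactly the two bounds in~\eqref{eq:sandwich}. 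Positive-definiteness of $M_p$ requires $J_p$ invertible, which holds on the chart domain away from the singularity, so $\mu_1>0$.

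\emph{Step 3 (isotropic case).} The key identity is that the chart metric at $\xi_p$ is the pullback of $g$. In a $g$-orthonormal basis, $d\exp_p|_0$ is the identity on $T_p\mathcal M$, so $J_p = D\varphi|_p$ and
\begin{equation*}
G(\xi_p) = (J_p^{-1})^\top J_p^{-1}.
\end{equation*}
This holds because $ds^2 = d\xi^\top G\,d\xi$ must equal $\|v\|^2$ for $d\xi = J_pv$. Therefore $G(\xi_p)^{-1}=J_pJ_p^\top$. With $\Sigma_p=\sigma^2I_d$ this gives $M_p=\sigma^2 G(\xi_p)^{-1}$, whose eigenvalues are $\sigma^2/\lambda_i^G$. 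Order reverses, so $\mu_1=\sigma^2/\lambda_{\max}^G$ and $\mu_d=\sigma^2/\lambda_{\min}^G$.

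The main obstacle is Step 3's pullback identity. It needs the fact $d\exp_p|_0=\mathrm{id}$ and careful bookkeeping of which basis $\varepsilon$ is expressed in; if the basis is not $g$-orthonormal, a Gram factor appears. I would verify the identity concretely against the yaw--pitch metric $\mathrm{diag}(\cos^2\theta,1)$: there $J_p=\mathrm{diag}(1/\cos\theta,1)$ in the orthonormal frame, which confirms it. The rest is the elementary monotone-coupling argument.
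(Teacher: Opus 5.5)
Your proposal is correct and follows essentially the same route as the paper. You linearise $\varphi\circ\exp_p$ to get $\Delta\xi_p\sim\mathcal N(0,M_p)$, then sandwich $R_p^2$ between $\mu_1\|Z\|^2$ and $\mu_d\|Z\|^2$ (your explicit eigendecomposition is the Rayleigh-quotient bound the paper cites), and finally use $D\exp_p|_0=\mathrm{id}$ with the pullback identity $J_p^\top G(\xi_p)J_p=I$ to get $M_p=\sigma^2G(\xi_p)^{-1}$ in the isotropic case.
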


\begin{proof}[Proof sketch]
Linearising $\varphi \circ \exp_p$ gives
$\Delta\xi_p \approx \mathcal{N}(0,\; J_p \Sigma_p J_p^\top)$.
The squared chart radius $R_p^2 = \|\Delta\xi_p\|^2$ is a
quadratic form in a Gaussian vector; the Rayleigh quotient
bounds $\mu_1 \|w\|^2 \le w^\top M_p\, w \le \mu_d \|w\|^2$
(with $w \sim \mathcal{N}(0, I_d)$)
yield~\eqref{eq:sandwich}.
When $\Sigma_p = \sigma^2 I_d$, since $D\exp_p|_0 = I$, in a $g$-orthonormal 
basis of $T_p\mathcal{M}$ we have $J_p = D\varphi_p$. Thus, metric 
compatibility yields $J_p J_p^\top = G^{-1}$, so $M_p = \sigma^2 G^{-1}$, 
and the bounds depend only on the metric tensor.
Full derivation in Supplementary Sec.~A.
\end{proof}

In the isotropic case, where $\lambda_{\min}^G(p)$ is
small, near poles on $\Stwo$
($\lambda_{\min}^G = \cos^2\!\theta$) or gimbal lock on $\SOthree$
($\lambda_{\min}^G = 1 - |\sin\beta|$), chart residuals become inflated in compressed directions.
Thus a fixed chart-score threshold accepts a vanishing intrinsic volume near the singularity, producing undercoverage.
When model error is anisotropic ($\Sigma_p \ne \sigma^2 I_d$),
the coverage distortion depends on the alignment between
$\Sigma_p$ and the metric eigenvectors.
If the model anisotropy opposes the chart distortion, the
effect can be attenuated.
If it is aligned with the distortion, the effect is amplified.
Supplementary Sec.~A.1 confirms both cases empirically.
Setting $r = \hat{q}_\alpha$ in~\eqref{eq:sandwich} gives
explicit conditional coverage bounds
(Supplementary Sec.~A, Corollary~1).
At singularities where $\lambda_{\min}^G \to 0$ and model error
remains bounded, the lower bound tends to zero.
Coverage is \emph{redistributed}, not
destroyed~\cite{vovk2005algorithmic}.
Regions near singularities undercover, while well-conditioned
regions overcover.
The residual ellipse eccentricity reaches $5.76{:}1$ at
$|\theta| = 80^\circ$ on $\Stwo$ and $4.35{:}1$ at
$|\beta| = 60^\circ$ on $\SOthree$; both ETH-XGaze and BIWI
contain substantial test samples at these angles.
The propositions provide local, first-order predictions;
the experiments in Sec.~\ref{sec:experiments} confirm these
predictions hold at finite radius across all settings tested.

\paragraph{Validity of the linearisation.}
Both propositions rely on a first-order approximation of the coordinate chart $\varphi$.
Near singularities, the metric tensor varies rapidly.
In these regions the approximation becomes less tight.
The quantitative bounds may therefore be loose when a prediction
set spans a region over which $G$ changes substantially.
Our experiments (Tables~\ref{tab:so3_biwi}--\ref{tab:aflw2000})
confirm that the qualitative predictions hold well beyond the
linearisation regime.
These include directional collapse, redistribution, and the
impossibility of scalar correction.
However, the numerical bounds of
Proposition~\ref{prop:coverage_distortion} should be interpreted
as indicative rather than exact at extreme angles.


\paragraph{Diagnostic: the volume density.}
The Riemannian volume density
$\sqrt{\det G(\xi)}$ denoted as $\CVF(\xi)$
measures how much manifold volume corresponds to one unit of
coordinate volume.
For both standard vision charts, $\CVF$ and $\lambda_{\min}$
are monotonically related
($\CVF = |\cos\theta|$ on $\Stwo$;
$|\cos\beta|$ on $\SOthree$),
making $\CVF$ an easily computed, closed-form proxy that
strongly correlates with where coverage collapse is observed
($r > 0.90$ on controlled settings, $r = 0.81$ on AFLW2000;
Sec.~\ref{sec:experiments}).
For these standard charts, $\CVF$ reduces to $|\cos(\cdot)|$
and serves primarily as an interpretability aid that maps
known coordinate geometry to expected coverage behaviour.
On general manifolds with non-diagonal or higher-dimensional
metric tensors, the eigenvalue-based bounds of
Proposition~\ref{prop:coverage_distortion} provide a more
informative diagnostic (Supplementary Sec.~C).

\subsection{Why Scalar Adaptive Methods Cannot Fix This}
\label{sec:impossibility}

Normalised CP~\cite{papadopoulos2002inductive} and
CQR~\cite{romano2019conformalized} rescale the acceptance
threshold based on local difficulty.
We show that this broad class of scalar threshold-modulating
methods is structurally unable to correct
chart distortion: it is a \emph{shape} problem, not a
\emph{scale} problem.

\begin{proposition}[Local impossibility of scalar chart correction]
\label{prop:impossibility}
Let $\varphi$ be a coordinate chart on a $d$-dimensional
Riemannian manifold $(\mathcal{M}, g)$, and assume the chart is nonsingular at the prediction $\hat{y}(x)$. Consider any
conformal predictor whose acceptance sets take the form
\[
  \mathcal{C}(x) = \bigl\{y \in \mathcal{M} :
    h\bigl(\|\varphi(y) - \varphi(\hat{y}(x))\|_2,\; x\bigr)
    \le q\bigr\},
\]
where $h(\cdot, x)$ is nondecreasing for each~$x$.
Then $\mathcal{C}(x)$ is the preimage of a chart-space ball of
radius $r(x)$. In a $g$-orthonormal basis of the tangent space at
$\hat{p} = \hat{y}(x)$, letting $J = D(\varphi^{-1})_{\hat\xi}$, the 
tangent-space approximation of $\mathcal{C}(x)$ is the ellipsoid
$\{\delta \in T_{\hat{p}}\mathcal{M} : \delta^\top (JJ^\top)^{-1} \delta \le r(x)^2\}$
with semi-axes $r(x)\sqrt{\lambda_i(\hat\xi)}$, where $\lambda_i(\hat\xi)$ 
are the eigenvalues of the metric tensor $G(\hat\xi) = J^\top J$.
The local axis ratio
\begin{equation}
  \frac{\mathrm{major}}{\mathrm{minor}}
  = \sqrt{\frac{\lambda_{\max}(\hat\xi)}
               {\lambda_{\min}(\hat\xi)}}
  \label{eq:axis_ratio}
\end{equation}
is independent of both $h$ and $r(x)$.
If $G(\hat\xi) \ne c\,I_d$ for any $c > 0$, no choice of $h$
can make the acceptance set locally isotropic.
\end{proposition}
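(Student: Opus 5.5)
The proof has two parts: a purely order-theoretic reduction showing that any nondecreasing $h$ only selects a chart-space radius, and a linear-algebra computation giving the shape of the image of a chart ball in the tangent space. The shape depends only on the chart differential, so it cannot depend on $h$. This is the same linearisation used in the proof of Proposition~\ref{prop:coverage_distortion}, now applied to a ball instead of a Gaussian.

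\emph{Step 1 (reduction to a chart ball).} Fix $x$ and set $t(y)=\|\varphi(y)-\varphi(\hat y(x))\|_2$. Because $h(\cdot,x)$ is nondecreasing, the sublevel set $A_x=\{t\ge 0: h(t,x)\le q\}$ is a down-closed subset of $[0,\infty)$. Hence $A_x$ is either $[0,r(x)]$ or $[0,r(x))$, where $r(x)=\sup A_x\in[0,\infty]$, or it is empty. Therefore $\mathcal C(x)=\varphi^{-1}\bigl(B(\hat\xi,r(x))\bigr)$, with $\hat\xi=\varphi(\hat y(x))$ and the ball open or closed. Open versus closed changes only a boundary, which is irrelevant for shape, volume and coverage. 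I would state the degenerate cases $r(x)\in\{0,\infty\}$ separately: there the set is a point or the whole chart domain, and local shape is vacuous. The function $h$ enters \emph{only} through the scalar $r(x)$.

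\emph{Step 2 (tangent-space image).} Since the chart is nonsingular at $\hat p$, $\varphi^{-1}$ is a local diffeomorphism near $\hat\xi$. To first order, a chart displacement $\Delta\xi$ maps to $\delta=J\Delta\xi\in T_{\hat p}\mathcal M$, expressed in a $g$-orthonormal basis; this is the basis that makes $\exp_{\hat p}^{-1}$ an isometry at first order. The pullback of $g$ is then $G(\hat\xi)=J^\top J$, which is consistent with Eq.~\eqref{eq:metric}. The ball $\|\Delta\xi\|_2\le r$ becomes $\|J^{-1}\delta\|^2\le r^2$, i.e.\ $\delta^\top (JJ^\top)^{-1}\delta\le r(x)^2$. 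Taking the SVD $J=U S V^\top$, this ellipsoid equals $U\{\|S^{-1}u\|\le r\}$. Its semi-axes are therefore $r\,s_i$, and $s_i^2$ are the eigenvalues of $J^\top J=G$, giving the stated semi-axes $r(x)\sqrt{\lambda_i(\hat\xi)}$.

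\emph{Step 3 (axis ratio and impossibility).} The ratio of the largest to the smallest semi-axis is $r\sqrt{\lambda_{\max}}/(r\sqrt{\lambda_{\min}})$, which gives Eq.~\eqref{eq:axis_ratio}. The factor $r(x)$, the only place $h$ enters, cancels. An ellipsoid is a ball exactly when all its semi-axes coincide, i.e.\ when all $\lambda_i$ are equal. Since $G$ is symmetric, this holds iff $G=cI_d$. So if $G(\hat\xi)\ne cI_d$, no choice of $h$ yields a locally isotropic set.

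\emph{Main obstacle.} The delicate point is making ``tangent-space approximation'' precise, because $r(x)$ need not be small. I would state the result as a limit. Rescale by $1/r$: the sets $r^{-1}\exp_{\hat p}^{-1}(\mathcal C(x))$ converge in Hausdorff distance to the ellipsoid $\{\|J^{-1}\delta\|\le 1\}$ as $r\to 0$. This follows from a Taylor expansion $\exp_{\hat p}^{-1}\circ\varphi^{-1}(\hat\xi+\Delta\xi)=J\Delta\xi+O(\|\Delta\xi\|^2)$, with the constant bounded on a compact neighbourhood by the inverse function theorem. For finite $r$, the claim would be stated as first-order, consistent with the paper's caveat on the validity of the linearisation.
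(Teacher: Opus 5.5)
Your proposal is correct and follows the same route as the paper's proof. Monotonicity of $h$ reduces $\mathcal{C}(x)$ to a chart ball, the linearisation $\delta = J\Delta\xi$ in a $g$-orthonormal frame turns that ball into the ellipsoid $\delta^\top (JJ^\top)^{-1}\delta \le r^2$, and the semi-axes follow because $JJ^\top$ and $J^\top J = G$ share eigenvalues (your SVD step is exactly this fact); your extra care with open versus closed sublevel sets, the degenerate radii, and the rescaled Hausdorff-limit reading of ``tangent-space approximation'' makes precise what the paper leaves as a first-order statement without changing the argument.
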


\begin{proof}[Proof sketch]
Monotonicity gives
$\{y : h(s,x) \le q\} = \{y : s \le r(x)\}$, a chart ball.
Let $J = D(\varphi^{-1})_{\hat\xi}$. Expressing $J$ in a $g$-orthonormal 
frame of $T_{\hat p}\mathcal{M}$, the pullback metric in chart coordinates 
is $G = J^\top J$.
The chart constraint $\|\Delta\xi\|_2 \le r$ maps to the tangent space 
via $\delta = J\Delta\xi$, yielding $\delta^\top (J J^\top)^{-1} \delta \le r^2$. 
Because the non-zero eigenvalues of $J J^\top$ are identical to those of $J^\top J = G$, 
the ellipsoid's semi-axes are exactly $r\sqrt{\lambda_i(G)}$. 
Full derivation in Supplementary Sec.~A.
\end{proof}
\noindent\textbf{Instantiation.}
On $\Stwo$, \eqref{eq:axis_ratio} gives
$1/|\cos\theta|$ (diverges at poles).
\newline On $\SOthree$,
$\sqrt{(1{+}|\sin\beta|)/(1{-}|\sin\beta|)}$ (diverges at
gimbal lock).

\paragraph{Scope of the impossibility result.}
Proposition~\ref{prop:impossibility} applies to conformal predictors
whose acceptance regions are radial in chart coordinates,
\ie sublevel sets of a scalar function of
$\|\varphi(y) - \varphi(\hat{y}(x))\|_2$.
Methods that explicitly learn anisotropic or non-radial
acceptance shapes (e.g., full covariance or metric learning in
the tangent space) fall outside this class.
Our result therefore does not preclude shape correction via
non-radial scores; it shows that scalar rescaling of a chart
norm, including normalised conformal prediction and
conformalized quantile regression, cannot eliminate
chart-induced anisotropy.
\paragraph{Scale vs.\ shape correction.}
We distinguish two types of correction, formalised in the
three-layer decomposition of Sec.~\ref{sec:three_layer}.
\emph{Scale correction} (normalised CP, CQR) adjusts the
radius $r(x)$ without changing axis ratios; it addresses
heteroscedastic model error but not chart distortion.
\emph{Shape correction} changes the score geometry.
The metric-corrected score
$\sqrt{\Delta\xi^\top G\,\Delta\xi}$ (first-order) and the
geodesic score $d_g(y,\hat{y})$ (exact) produce locally
isotropic acceptance sets (Layer~3).
Combining shape and scale correction substantially reduces conditional gaps
across experiments (Sec.~\ref{sec:experiments}).
Methods that learn anisotropic prediction
sets~\cite{johnstone2021conformal} could achieve shape
correction but do not yet exist for $\Stwo$ or $\SOthree$;
closed-form geodesics make this unnecessary on these manifolds.


\subsection{Scoring Functions}
\label{sec:scores}

Table~\ref{tab:score_taxonomy} classifies scores by geometric
properties and Layer-3 correction type.

\begin{table}[t]
\centering
\caption{Nonconformity score taxonomy.
$^*$Monotone-equivalent to geodesic; identical conformal sets.}
\label{tab:score_taxonomy}
\small
\begin{tabular}{@{}llccc@{}}
\toprule
Score & Manifold & Chart? & Intrinsic? & Shape \\
\midrule
Yaw/pitch $L_2$ & $\Stwo$ & \textbf{Yes} & No & None \\
Euler ZYX $L_2$ & $\SOthree$ & \textbf{Yes} & No & None \\
\midrule
Metric-corr.\ YP & $\Stwo$ & Partial & No & 1st order \\
Metric-corr.\ Euler & $\SOthree$ & Partial & No & 1st order \\
\midrule
Geodesic (arc) & $\Stwo$ & No & \textbf{Yes} & Exact \\
$\R^3$ chord & $\Stwo$ & No & \textbf{Yes}$^*$ & Exact \\
Geodesic (angle) & $\SOthree$ & No & \textbf{Yes} & Exact \\
Quaternion $d_q$ & $\SOthree$ & No & \textbf{Yes}$^*$ & Exact \\
\bottomrule
\end{tabular}
\end{table}

\paragraph{Geodesic scores.}
On $\Stwo$: $d_{\Stwo}(v,\hat{v}) = \arccos(v^\top \hat{v})$.
On $\SOthree$:
$d_{\SOthree}(R_1, R_2)
= \|\log(R_1^\top R_2)\|_F / \sqrt{2}$.
Because conformal prediction depends only on score
\emph{ranking}, any strictly monotone surrogate of the geodesic
distance yields identical conformal sets. For example, on $\Stwo$
one may use $1 - v^\top \hat{v}$ (or chord distance), and on
$\SOthree$ $1 - |q^\top \hat{q}|$, thereby avoiding
trigonometric functions and matrix logarithms.


\paragraph{Metric-corrected scores.}
A first-order correction weights chart residuals by the local
metric:
\begin{equation}
  s_{\mathrm{mc}}
  = \sqrt{\Delta\xi^\top G(\bar\xi)\, \Delta\xi}
  = \sqrt{(\Delta\psi)^2 \cos^2\!\bar\theta
    + (\Delta\theta)^2},
  \label{eq:cw}
\end{equation}
evaluated at the midpoint $\bar\xi$.
This removes first-order anisotropy but retains higher-order
distortion.


\paragraph{Normalised scores.}
Dividing any base score by a difficulty estimate
$\hat\sigma(x)$~\cite{papadopoulos2002inductive} addresses
heteroscedastic model error (Layer~2).
In all experiments, $\hat\sigma(x)$ is a $k$-NN mean absolute
residual in feature space (details in Supplementary Sec.~F).
By Proposition~\ref{prop:impossibility}, normalisation cannot
alter acceptance-set \emph{shape}: normalised chart scores
remain anisotropic; normalised geodesic scores address both
layers simultaneously.

\subsection{Three-Layer Decomposition}
\label{sec:three_layer}

We decompose conditional coverage gaps into three layers:
marginal validity (Layer~1, by construction),
heteroscedastic model error (Layer~2, scale correction), and
chart-induced distortion
(Layer~3, shape correction only;
Proposition~\ref{prop:impossibility}).
If scalar methods could absorb chart distortion, normalised
yaw--pitch would match normalised geodesic; a persistent
40\pp{} gap on ETH-XGaze confirms they cannot
(Sec.~\ref{sec:experiments}).
Thus, plain geodesic scoring removes Layer~3 chart distortion, but residual scale variation may remain; normalised geodesic combines Layer~3 shape correction with Layer~2 scale correction. On ETH-XGaze, plain geodesic scoring raises near-pole coverage from 38.9\% to 74.6\% (Layer~3 removed), while normalised geodesic further closes the remaining gap to 89.3\% (Layer~2 also addressed).
A practical protocol is given in Sec.~\ref{sec:discussion}.

\section{Experiments}
\label{sec:experiments}

We evaluate conformal coverage across two manifolds ($\Stwo$,
$\SOthree$), four datasets, and both controlled and real-model
settings.
All experiments use split conformal
prediction~\cite{vovk2005algorithmic} with $\alpha = 0.10$
(target: 90\% marginal coverage).
Per-regime coverage is reported as mean $\pm$ standard deviation
over multiple random calibration/test splits (subject-disjoint).
Throughout, ``conditional coverage'' means \emph{slice-conditional} coverage: empirical coverage in a specified output region (\eg, $|\mathrm{pitch}|>70^\circ$). This is weaker than pointwise conditional coverage~\cite{barber2021limits} but sufficient to reveal failures hidden by marginal metrics.


\subsection{Controlled $\SOthree$ Setting: Isolating the Mechanism}
\label{sec:exp_biwi}

\paragraph{Setup.}
We use ground-truth rotations from BIWI~\cite{biwi}
($N = 15{,}678$, 24 subjects) and add calibrated isotropic
Lie-algebra noise at $\sigma = 5^\circ$, producing geodesic
MAE of $8.0^\circ$ (comparable to modern
methods~\cite{hempel20226d,yang2019fsa}).
Because the noise is isotropic on the manifold by construction,
any coverage variation is attributable solely to score geometry.

\paragraph{Results.}
Table~\ref{tab:so3_biwi}: all methods achieve $\approx$90\%
marginal coverage.
Near gimbal lock ($|\beta| > 60^\circ$, 4.5\% of samples),
Euler $L_2$ collapses to $57.5 \pm 2.1$\% while geodesic
maintains $89.5 \pm 1.4$\%.
At $|\beta| < 30^\circ$, Euler over-covers at $95.6$\%,
compensating as
Proposition~\ref{prop:coverage_distortion} predicts.

\begin{table}[t]
\centering
\caption{\textbf{Controlled $\SOthree$ experiment} (BIWI,
$\alpha = 0.10$, $\sigma = 5^\circ$, 50 splits).
Near gimbal lock, Euler scoring collapses; geodesic maintains
target coverage. All values in \%. Bold marks the largest absolute deviation from the 90\% target.}
\label{tab:so3_biwi}
\small
\setlength{\tabcolsep}{4pt}
\renewcommand{\arraystretch}{1.05}
\begin{tabular}{@{}lccccc@{}}
\toprule
Region & Geo.$^\dagger$ & Euler $L_2$ & Met.-corr. & N.\ Euler & N.\ Geo. \\
\midrule
Overall            & $90.0 {\pm} 0.5$ & $89.9 {\pm} 0.5$ & $89.9 {\pm} 0.5$ & $90.0 {\pm} 0.4$ & $90.0 {\pm} 0.5$ \\
$|\beta|{<}30^\circ$ & $89.7 {\pm} 0.6$ & $95.6 {\pm} 0.4$ & $90.2 {\pm} 0.6$ & $92.9 {\pm} 0.4$ & $89.5 {\pm} 0.6$ \\
$|\beta|{>}60^\circ$ & $89.5 {\pm} 1.4$ & $\mathbf{57.5 {\pm} 2.1}$ & $87.8 {\pm} 1.5$ & $72.7 {\pm} 1.7$ & $89.0 {\pm} 1.2$ \\
\midrule
\multicolumn{6}{@{}l}{\small $\CVF$--coverage correlation:
$r = 0.987$\; ($p = 5 \times 10^{-6}$).} \\
\bottomrule
\multicolumn{6}{@{}l}{\scriptsize $^\dagger$Quaternion distance
produces identical results (monotone equivalence).} \\
\end{tabular}
\end{table}

\paragraph{Robustness.}
The pattern is robust across noise magnitudes
($\sigma \in \{3^\circ, 8^\circ, 12^\circ\}$; Euler stays at
${\sim}61$\%, geodesic at ${\approx}91$\%) and under anisotropic
noise (ratios up to 4:1, CVF $r > 0.96$).
Only at extreme anisotropy (5.7:1:1.3) does Euler's failure
attenuate (84.3\%); geodesic remains at 90\%.
Full sweep in Supplementary Sec.~A.1.


\subsection{$\Stwo$ Gaze Estimation: Real Models}
\label{sec:exp_gaze}

\paragraph{Setup.}
We train ResNet-18 gaze estimators on
ETH-XGaze ($N_\text{test} = 117{,}360$)
and Gaze360~\cite{kellnhofer2019gaze360}
($N_\text{test} = 25{,}969$), comparing the non-redundant scoring functions in
(Table~\ref{tab:score_taxonomy}).
We define ``near-pole'' as $|\mathrm{pitch}| > 70^\circ$,
within the physiological range of combined eye and head
rotation (Sec.~\ref{sec:intro}) and relevant to
safety-critical gaze targets including the instrument
cluster, lap-mounted devices, and side
mirrors~\cite{nikan2022appearance}.

\paragraph{Results.}
Marginal coverage is indistinguishable across methods ($90.8$--$91.0$\%, $\le9.18^\circ$ geodesic radius, $\le0.097$~sr solid angle). Table~\ref{tab:gaze_results} shows a sharp near-pole disparity: YP $L_2$ achieves $38.9\pm3.9$\%, while coordinate-free scoring reaches $74.6\pm3.9$\%. Table~\ref{tab:gaze360_main} replicates this on Gaze360: $42.0$\% for YP $L_2$, $77.2$\% for geodesic, and $86.5$\% for normalised geodesic. 

\paragraph{Scale vs.\ shape.} Normalised YP improves near-pole coverage from 38.9\% to 50.4\%, but remains 40\pp{} below target; metric-corrected scoring reaches 74.1\%, and normalised geodesic reaches 89.3\%, confirming Proposition~\ref{prop:impossibility}.


\begin{table}[t]
\centering
\caption{\textbf{$\Stwo$ gaze estimation} (ETH-XGaze R18,
$\alpha = 0.10$, 5 splits).
Conditional coverage by output regime. Wilson 95\% CIs in brackets. Bold marks the largest absolute deviation from the 90\% target.}
\label{tab:gaze_results}
\small
\setlength{\tabcolsep}{4pt}
\renewcommand{\arraystretch}{1.05}
\begin{tabular}{@{}lccc@{}}
\toprule
Method & Overall & Equator & Near Pole \\
\midrule
Geodesic$^\dagger$ & $91.0 \pm 0.7$ & $94.0 \pm 1.1$ & $74.6 \pm 3.9$\;\scriptsize[72.3, 76.8] \\
Yaw/Pitch $L_2$        & $90.8 \pm 0.7$ & $96.5 \pm 1.0$ & $\mathbf{38.9 \pm 3.9}$\;\scriptsize[36.4, 41.5] \\
Metric-corrected       & $91.0 \pm 0.7$ & $94.1 \pm 1.1$ & $74.1 \pm 3.9$\;\scriptsize[71.7, 76.3] \\
Normalised YP          & $89.0 \pm 3.3$ & $94.2 \pm 2.1$ & $50.4 \pm 5.9$\;\scriptsize[47.8, 53.0] \\
Norm.\ geodesic        & $89.1 \pm 3.6$ & $90.6 \pm 3.0$ & $89.3 \pm 1.9$\;\scriptsize[87.6, 90.8] \\
\midrule
\multicolumn{4}{@{}l}{\small $\CVF$--coverage:
$r = 0.957$ (ETH-XGaze), $r = 0.976$ (Gaze360).} \\
\bottomrule
\multicolumn{4}{@{}l}{\scriptsize $^\dagger$Geodesic and $\R^3$ chord
produce identical acceptance regions.} \\
\end{tabular}
\end{table}

\begin{table}[t]
\centering
\caption{\textbf{Gaze360 conditional coverage} (R18, $\alpha=0.10$, 5 splits; means shown, full mean$\pm$std in Supplementary Sec.~B).
Bold marks the largest absolute deviation from the 90\% target.}
\label{tab:gaze360_main}
\small
\setlength{\tabcolsep}{4pt}
\begin{tabular}{@{}lccc@{}}
\toprule
Score & Overall & Equator & Near Pole \\
\midrule
Geodesic & $89.7$ & $90.6$ & $77.2$ \\
YP $L_2$ & $89.9$ & $92.2$ & $\mathbf{42.0}$ \\
Metric-corr. & $89.7$ & $90.7$ & $74.5$ \\
Norm. YP & $89.9$ & $91.6$ & $50.8$ \\
Norm. geodesic & $89.8$ & $90.1$ & $86.5$ \\
\bottomrule
\end{tabular}
\end{table}


\subsection{Real-Model $\SOthree$ Head Pose: AFLW2000-3D}
\label{sec:exp_aflw}

\paragraph{Setup.}
We run SixDRepNet~\cite{hempel20226d} (RepVGG-B1g2) on
AFLW2000-3D~\cite{aflw2000} ($N = 1{,}969$), which includes
substantial profile views ($|\beta| > 60^\circ$: $n = 278$,
14.1\%).
Normal cervical rotation reaches $\pm70$--$80^\circ$
($144^\circ$ total~\cite{feipel1999normal}), placing
profile-range orientations well within everyday head motion
during conversation, shoulder checks, and blind-spot glances.
SixDRepNet uses the continuous 6D rotation
representation~\cite{zhou2019continuity} internally; the
failure arises \emph{solely} from reverting to Euler
coordinates at scoring time.
No faces or poses are filtered; all 1,969 samples are used
regardless of detection confidence or pose extremity.

\paragraph{Results.}
Table~\ref{tab:aflw2000}: at $|\beta| > 60^\circ$, geodesic
scoring reaches $75.1 \pm 3.7$\% (model weakness, Layer~2);
Euler drops further to $55.2 \pm 4.2$\%, the 20\pp{} gap
isolates chart distortion (Layer~3).
Normalised geodesic reaches $80.3 \pm 3.1$\%, recovering part
of the Layer-2 deficit; normalised Euler remains at
$62.8 \pm 4.0$\%.
Mondrian CP partitions samples into output regimes and calibrates a separate conformal threshold per bin. With a 3-bin geodesic base it restores $90.1 \pm 3.3$\%, showing that group-wise calibration can address Layer~2/model-regime effects once the base score geometry is intrinsic.

\begin{table}[t]
\centering
\caption{\textbf{Real-model conformal on AFLW2000-3D}
(SixDRepNet, $\alpha = 0.10$, 50 splits).
The 20\pp{} gap at $|\beta|{>}60^\circ$ isolates Layer~3. All values in \%. Bold marks the largest absolute deviation from the 90\% target.}
\label{tab:aflw2000}
\small
\setlength{\tabcolsep}{4pt}
\renewcommand{\arraystretch}{1.05}
\begin{tabular}{@{}lccccc@{}}
\toprule
Region & Geo.$^\dagger$ & Euler $L_2$ & Met.-corr. & N.\ Geo. & Mondrian \\
\midrule
Overall & $89.8{\pm}1.4$ & $89.8{\pm}1.2$ & $89.8{\pm}1.3$ & $89.5{\pm}1.5$ & --- \\
$|\beta|{<}30$ & $95.6{\pm}0.8$ & $98.2{\pm}0.5$ & $95.0{\pm}0.9$ & $93.1{\pm}1.0$ & --- \\
$|\beta|{>}60$ & $75.1{\pm}3.7$ & $\mathbf{55.2{\pm}4.2}$ & $78.1{\pm}2.9$ & $80.3{\pm}3.1$ & $90.1{\pm}3.3$ \\
\midrule
\multicolumn{6}{@{}l}{\small $\CVF$--coverage: $r = 0.807$\;
($p = 1.5 \times 10^{-2}$).} \\
\bottomrule
\multicolumn{6}{@{}l}{\scriptsize $^\dagger$Quaternion distance
produces identical results.}
\end{tabular}
\end{table}


\subsection{Separating the Layers: Cross-Setting Synthesis}
\label{sec:three_layer_exp}

The three-layer decomposition
(Sec.~\ref{sec:three_layer}) predicts that normalised chart
scores should improve over unnormalised (Layer~2 addressed)
but fall short of normalised geodesic (Layer~3 unresolved).
This prediction holds across all datasets and backbone settings tested.


\paragraph{Layer~2 gains.}
Normalisation yields consistent but limited improvements:
+15\pp{} on BIWI ($57.5 \to 72.7$\%), +12\pp{} on ETH-XGaze
($38.9 \to 50.4$\%), +8\pp{} on AFLW2000
($55.2 \to 62.8$\%).

\paragraph{Layer~3 residuals.}
In every setting, a large gap persists between normalised
chart and normalised geodesic scores.
39\pp{} on ETH-XGaze ($50.4$ vs.\ $89.3$\%),
16\pp{} on BIWI ($72.7$ vs.\ $89.0$\%),
18\pp{} on AFLW2000 ($62.8$ vs.\ $80.3$\%),
and 43\pp{} on GazeTR-ViT ($38.7$ vs.\ $81.3$\%;
trained on ETH-XGaze, $N_\text{test} = 117{,}360$).
This residual is chart distortion in isolation
(Proposition~\ref{prop:impossibility}).
We do not include a separate CQR baseline because CQR with
chart-norm scores produces radial acceptance sets of the same
form as normalised CP; Proposition~\ref{prop:impossibility}
applies identically, and normalised YP already demonstrates
the scalar-method ceiling.
Notably, the strongest backbone (GazeTR-ViT) produces the
\emph{largest} gap. Better models concentrate predictions in
well-conditioned regions, amplifying the relative penalty at
singularities (Sec.~\ref{sec:ablations}).\\
Fig.~\ref{fig:coverage_ribbon} 
tracks each scoring function’s conditional coverage
from equator to pole across all four datasets.
Chart-based scores (red) collapse sharply beyond moderate angles,
while intrinsic scores (blue) substantially reduce the coverage gap.
\begin{figure*}[t]
    \centering
    \includegraphics[width=0.95\textwidth]{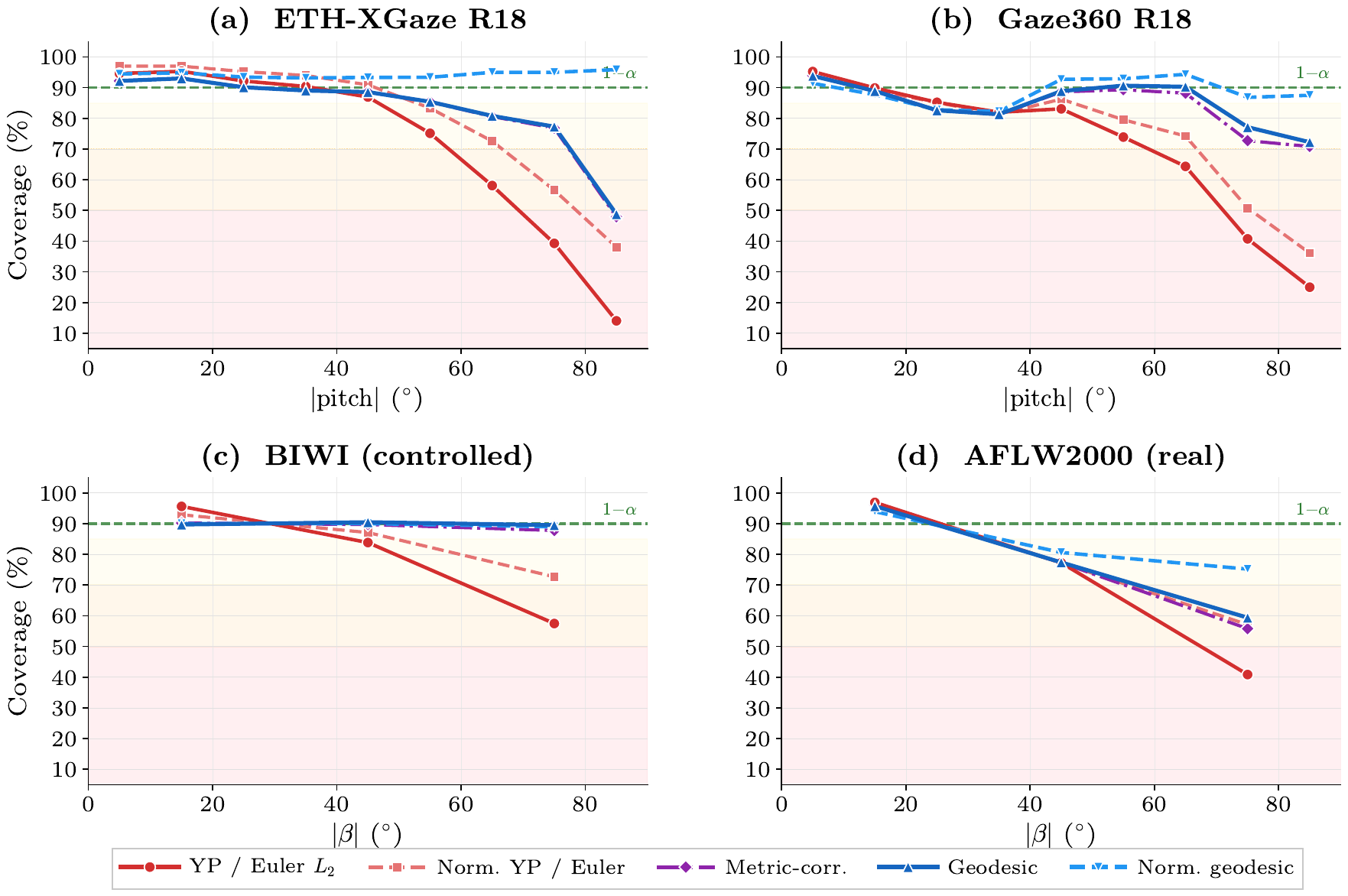}
    \caption{\textbf{Conditional coverage from equator to pole}
    ($\alpha{=}0.10$).
    Shaded bands mark degraded ($<85\%$), severe ($<70\%$),
    and collapsed ($<50\%$) regimes.
    Chart-based scores (red) drop to $14$--$25\%$ at extreme
    angles; normalised geodesic (blue dashed) remains closest to the target and substantially reduces the collapse.
    BIWI~(c) isolates geometry as the sole factor;
    AFLW2000~(d) compounds model error, but intrinsic score-only methods
    retain the smallest gap to target among non-Mondrian scores.}
    \label{fig:coverage_ribbon}
\end{figure*}

\section{Ablations}
\label{sec:ablations}

We summarise key ablations. Full tables appear in Supplementary
Secs.~B and~F.

\paragraph{Stronger models amplify chart distortion.}
With a ViT-S backbone (GazeTR~\cite{cheng2022gaze}, ETH-XGaze),
YP near-pole coverage drops to $30.3 \pm 7.8$\%, the lowest of all
backbones tested. Normalised geodesic reaches
$81.3 \pm 4.6$\%.
Stronger models concentrate predictions in well-conditioned
regions, so the conformal quantile is set more tightly there.
This increases the relative penalty at singularities.

\paragraph{Chart-repair baselines.}
A multi-chart atlas reaches 59.7\% near-pole on Gaze360 versus
77.2\% for geodesic (Supplementary Sec.~F).
A learned Mahalanobis score partially compensates.
It achieves 80.3$\pm$7.0\% on ETH-XGaze and
77.8$\pm$6.2\% on Gaze360 but with higher variance.
On GazeTR it reaches 76.1$\pm$11.1\% versus
81.3$\pm$4.6\% for normalised geodesic.
Normalised geodesic requires no learned parameters and is up to
9\pp{} higher than the best Mahalanobis setting.
Neither approach removes the geometric mismatch predicted by
Proposition~\ref{prop:impossibility}.

\paragraph{Robustness and Runtime.}
The effect persists across calibration sizes with
$<0.5$\pp{} variation from 1\% to 100\% of data.
It persists across binning strategies.
In the ETH-XGaze binning ablation, equal-mass Mondrian bins still yield 66.3\% near-pole coverage, below the corresponding normalised-geodesic result; full details are in Supplementary Sec.~F.
It persists under cross-dataset recalibration.
ETH-XGaze$\to$Gaze360 still gives YP near 54\%.
Geodesic arc, $\R^3$ chord, and quaternion distance produce
identical acceptance regions. Monotone-equivalent geodesic surrogates require only a dot
product and cost ${\le}0.02\,\mu$s per sample on CPU.
Chart-coordinate norms cost about the same.
Metric-corrected scores are slightly slower.
All scoring costs are ${\ll}1$\% of backbone inference time.

\section{Discussion}
\label{sec:discussion}

\paragraph{A hidden failure mode under global summaries.}
The slice-conditional coverage collapse shown in
Sections~\ref{sec:experiments}--\ref{sec:ablations} is largely
invisible under global summary metrics.
Marginal coverage, MAE, and mean set size can appear normal
(Table~\ref{tab:gaze_results}) even when reliability degrades in
specific output regions.
This is concerning in safety-relevant regimes such as extreme pitch
and profile poses~\cite{nikan2022appearance,martyniuk2022dad,patney2016towards}.
Chart coordinates persist because datasets, model heads, and downstream systems often store, decode, or calibrate yaw--pitch/Euler parameters, even when final point-estimation accuracy is reported using angular or geodesic metrics.
A practical fix is to change only the nonconformity score.
For manifold-valued outputs: convert predictions and labels to an intrinsic representation; compute a geodesic or monotone-equivalent score; optionally normalise for heteroscedasticity; calibrate with split conformal prediction; and report slice-conditional coverage alongside marginal coverage.

\paragraph{Scale adaptation cannot correct geometric distortion.}
The results distinguish scale and shape correction.
Scalar adaptive methods adjust set size and can address
heteroscedastic error.
They do not change the local geometry induced by a chart.
Proposition~\ref{prop:impossibility} formalises this limitation,
and the experiments reflect it through consistent gaps between
normalised chart scores and normalised geodesic scores.
This complements known limits on conditional
coverage~\cite{barber2021limits}.
Distribution-free pointwise guarantees are unattainable in general,
but part of the conditional gap observed in geometric settings can
be reduced by using intrinsic scores.

\paragraph{Implications beyond gaze and head pose.}
The choice of nonconformity score is important when outputs lie on a
manifold.
More generally, pipelines that
(i) produce manifold-valued outputs,
(ii) measure residuals in chart coordinates, and
(iii) report only marginal conformal coverage
may undercover in geometrically ill-conditioned regions.
The same diagnostic applies to other geometric tasks, including camera pose in $SE(3)$, angular prediction on products of circles, surface normal estimation on $\Stwo$, and articulated human pose represented as products of joint rotations or directions.
For $SE(3)$, translation is Euclidean while rotation inherits the $\SOthree$ analysis, with the translation--rotation scale chosen by the application.
In such settings, chart singularities or poorly conditioned metric tensors indicate where reliability may degrade.
\section{Limitations}
\label{sec:limitations}

Our approach inherits split conformal prediction's standard
assumptions (exchangeability, held-out calibration data).
Propositions~\ref{prop:coverage_distortion}
and~\ref{prop:impossibility} are local, first-order results.
The bounds may be loose when errors are large or strongly
anisotropic, though robustness experiments
(Sec.~\ref{sec:exp_biwi}) confirm the qualitative predictions
hold under moderate violations (anisotropy up to 4:1 on
$\SOthree$, real model errors on AFLW2000).
We evaluate on $\Stwo$ and $\SOthree$; the theory applies to any Riemannian manifold with a non-isometric or nonconformal chart, but empirical validation on other spaces (\eg $SE(3)$ or articulated pose) is future work.
On manifolds without closed-form geodesics, intrinsic scoring may require numerical approximations or monotone surrogates, and the practical cost depends on the application.
All conditional coverage results are \emph{slice-conditional},
weaker than pointwise conditional
coverage~\cite{barber2021limits}.
On non-singular but non-isometric charts (bounded distortion
everywhere), the same redistribution mechanism applies but with
smaller magnitude; geodesic scoring still eliminates it.
Finally, geodesic scoring eliminates chart-induced distortion
but does not address heteroscedastic model error, which requires
complementary methods such as normalised conformal prediction.

\section{Conclusion}
\label{sec:conclusion}

We studied conformal prediction for manifold-valued vision outputs and showed that chart-coordinate nonconformity scores can silently redistribute coverage. Across four datasets, slice-conditional coverage drops by 30--50\pp{} near coordinate singularities despite nominal marginal coverage. We formalised this mechanism and proved that scalar adaptive conformal methods can adjust set size but cannot correct chart-induced shape distortion.

Replacing chart residuals with coordinate-free geodesic, chord, or quaternion-equivalent scores consistently reduces this failure without retraining and with negligible overhead. For geometric outputs, conformal reliability therefore depends not only on calibration, but also on measuring errors in a geometry-consistent way.

\section*{Acknowledgements}
This work was supported by the UKRI BBSRC project EyeWarn (APP37953).

%
%
\bibliographystyle{splncs04}
\bibliography{main}
\end{document}